\documentclass[submission,copyright,creativecommons]{eptcs}
\usepackage{breakurl}             
\usepackage{underscore}           
\usepackage{amsmath, amsthm}
\usepackage{graphicx}

\title{Cooperative Epistemic Multi-Agent Planning \\ for Implicit Coordination}
\author{Thorsten Engesser
\institute{Institut f\"ur Informatik \\ Albert-Ludwigs-Universit\"at \\ Freiburg, Germany}
\email{engesset@cs.uni-freiburg.de}
\and
Thomas Bolander 
\institute{DTU Compute \\ Technical University of Denmark \\ Copenhagen, Denmark}
\email{tobo@dtu.dk}
\and
Robert Mattm\"uller
\institute{Institut f\"ur Informatik \\ Albert-Ludwigs-Universit\"at \\ Freiburg, Germany}
\email{mattmuel@cs.uni-freiburg.de}
\and
Bernhard Nebel
\institute{Institut f\"ur Informatik \\ Albert-Ludwigs-Universit\"at \\ Freiburg, Germany}
\email{nebel@cs.uni-freiburg.de}
}

\usepackage{color, url}
\usepackage{pgf, tikz}
\usepackage{enumerate}
\usepackage{paralist}
\usepackage{subcaption}

\def\myiff{\textnormal{ iff }}
\def\myand{\textnormal{ and }}

\newcommand{\myset}[1]{\left\{#1\right\}}
\newcommand{\mysetw}[2]{\myset{#1~|~#2}}

\newcommand{\agents}{\mathcal{A}}
\newcommand{\model}{\mathcal{M}}
\newcommand{\myevent}{\mathcal{E}}

\def\lang{\mathcal{L}_{\textnormal{KC}}} 
\def\langd{\mathcal{L}_{\textnormal{DEL}}} 

\theoremstyle{definition}
\newtheorem{definition}{Definition}
\newtheorem{example}{Example}

\theoremstyle{plain}
\newtheorem{theorem}{Theorem}

\newtheorem{proposition}[theorem]{Proposition}

\def\pre{\mathit{pre}}
\def\post{\mathit{post}}
\def\owner{\omega}
\def\succ{\sigma}

\def\tpl{\left<}
\def\tpr{\right>}

\newcommand{\after}[1]{(\!(#1)\!)} 
\newcommand{\als}[2]{{#1}^{#2}}
\newcommand{\kafter}[1]{K_{\owner(#1)}\after{#1}} 

\def\globals{\textit{Globals}}
\newcommand{\smin}[1]{S_#1^\text{min}}
\newcommand{\sglobal}{S^\text{gl}}

\let\phi\varphi

\def\at{\textit{at-}}
\def\for{\textit{for-}}

\usetikzlibrary{trees,arrows,arrows,shapes,positioning,calc}
\tikzset{
  uworld/.style={circle, fill, inner sep=1.5pt, outer sep=2pt, anchor=base},
  udworld/.style={circle, draw, inner sep=2.7pt, outer sep=2pt, anchor=base}}

\newcommand{\cfig}[1]{\begingroup
\setbox0=\hbox{#1}%
\parbox{\wd0}{\box0}\endgroup}

\newcommand{\onestate}[3]{%
\begin{tikzpicture}[baseline=-4pt] \footnotesize
    \draw
    (0,0) node[udworld, #3] (d1) {}
    (0,0) node[uworld, #2, label=below:{#1}] (w1) {};
\end{tikzpicture}}

\newcommand{\twoindist}[9]{%
\begin{tikzpicture}[baseline=-3.5pt] \small 
    \draw
    (0,0) node[udworld, #6] (d1) {}
    (#9,0) node[udworld, #7] (d2) {}
    (0,0) node[uworld, #4, label=below:{#1}] (w1) {}
    (#9,0) node[uworld, #5, label=below:{#2}] (w2) {};
    \draw[#8] (w1) -- node[above] {#3} (w2);
\end{tikzpicture}}

\newcommand{\leaveout}[1]{}

\DeclareMathAlphabet{\mathcal}{OMS}{cmsy}{m}{n}

\begin{document}
\maketitle

\begin{abstract}
Epistemic planning can be used for decision making in multi-agent situations
with distributed knowledge and capabilities. Recently, Dynamic Epistemic Logic (DEL) has been shown to provide a very natural and expressive framework for epistemic planning.
We extend the DEL-based epistemic planning framework to include perspective shifts, allowing us to define new
notions of sequential and conditional planning with implicit coordination. With
these, it is possible to solve planning tasks with joint goals in a decentralized
manner without the agents having to negotiate about and commit to a joint policy
at plan time. First we define the central planning notions and sketch the implementation of a planning system built on those notions.
Afterwards we provide some case studies in order to evaluate the planner
empirically and to show that the concept is useful for multi-agent systems in practice.
\end{abstract}


\section{Introduction}
One important task in Multi-Agent Systems is to collaboratively reach a joint
goal with multiple autonomous agents. The problem is particularly challenging in
situations where the knowledge and capabilities required to reach the goal are
distributed among the agents.  Most existing approaches therefore apply some
centralized coordinating instance from the outside, strictly separating the
stages of communication and negotiation from the agents' internal planning and
reasoning processes.  In contrast, building upon the epistemic planning
framework by Bolander and Andersen~\cite{bolander11}, we propose a decentralized
planning notion in which each agent has to individually reason about the entire
problem and autonomously decide when and how to (inter-)act. For this, both
reasoning about the other agents' possible contributions and reasoning about
their capabilities of performing the same reasoning is needed.
We achieve our notion of implicitly coordinated plans by requiring all
desired communicative abilities to be modeled as epistemic actions
which then can be planned alongside their ontic counterparts, thus
enabling the agents to perform observations and coordinate at run time.
It captures the intuition that communication clearly constitutes an
action by itself and, more subtly, that even a purely ontic action can
play a communicative role (e.g.\ indirectly suggesting follow-up
actions to another agent).  Thus, for many problems our approach
appears quite natural. On the practical side, the epistemic planning
framework allows a very expressive way of defining both the agents'
physical and communicative abilities.

Consider the following example scenario. Bob would like to borrow the apartment
of his friend Anne while she is away on vacation. Anne would be very happy to do
him this favor. So they now have the joint goal of making sure that Bob can
enter the apartment when he arrives. Anne will think about how to achieve the
goal, and might come up with the following plan: Anne puts the key under the
door mat; when Bob arrives, Bob takes the key from under the door mat; Bob opens
the door with the key. Note that the plan does not only contain the actions
required by Anne herself, but also the actions of Bob. These are the kind of
multi-agent plans that this paper is about.

However, the plan just presented does not count as an \emph{implicitly
coordinated} plan. When Bob arrives at the apartment, he will clearly not know
that the key is under the door mat, unless Anne has told him, and this
announcement was not part of the plan just presented. If Anne has the ability to
take Bob's perspective (she has a \emph{Theory of Mind} concerning Bob
\cite{prem.ea:does}), Anne should of course be able to foresee this problem, and
realize that her plan can not be expected to be successful. An improved plan
would then be: Anne puts the key under the door mat; Anne calls Bob to let him
know where the key is; when Bob arrives, Bob takes the key from under the door
mat; Bob opens the door with the key. This \emph{does} qualify as an implicitly
coordinated plan. Anne now knows that Bob will know that he can find the key
under the door mat and hence will be able to reach the goal. Anne does not have
to request or even coordinate the sub-plan for Bob (which is: take key under
door mat; open door with key), as she knows he will himself be able to determine
this sub-plan given the information she provides. This is an important aspect of
implicit coordination: coordination happens implicitly as a consequence of
observing the actions of others (including announcements), never explicitly
through agreeing or committing to a specific plan. The essential contributions
of this paper are to formally define this notion of implicitly coordinated plans
as well as to document and benchmark an implemented epistemic planner that
produces such plans.

\enlargethispage{\baselineskip}
Our work is situated in the area of distributed problem solving and
planning~\cite{durfee01} and directly builds upon the framework introduced by
Bolander and Andersen~\cite{bolander11} and L\"owe, Pacuit, and Witzel~\cite{lowe11}, who formulated the planning problem in the context of
Dynamic Epistemic Logic (DEL)~\cite{ditmarsch07}. Andersen, Bolander, and
Jensen~\cite{andersen12} extended the approach to allow strong and weak
conditional planning in the single-agent case. Algorithmically, (multi-agent)
epistemic planning can be approached either by compilation to classical
planning~\cite{albore09,kominis15,muise15} or by search in the space of
``nested''~\cite{bolander11} or ``shallow'' knowledge
states~\cite{petrick02,petrick04,petrick13}. Since compilation approaches to
classical planning can only deal with bounded nesting of knowledge (or belief),
similar to Bolander and Andersen~\cite{bolander11}, we use search in the space
of epistemic states to find a solution. One of the important features that distinguishes our work from more traditional multi-agent planning~\cite{brenner09} is the explicit notion of perspective shifts needed for agents to reason about the possible plan contributions of other agents---and hence needed to achieve implicit coordination. 

Our concepts can be considered related to recent work in temporal
epistemic logics \cite{bulling14,jamroga07,jamroga04b}, which
addresses a question similar to ours, namely what groups of agents can jointly
achieve under imperfect information. These approaches are based on concurrent
epistemic game structures. Our approach is different in a number of ways, including: 1) As in classical planning, our actions and their effects are explicitly and compactly represented in an action description language (using the event models of DEL); 2) Instead of joint actions we have sequential action execution, where the order in which the agents act is not predefined; 3) None of the existing solution concepts considered in temporal epistemic logics capture the stepwise shifting perspective underlying our notion of implicitly coordinated plans.

The present paper is an extended and revised version of a paper presented at the Workshop on Distributed and Multi-Agent Planning (DMAP) 2015 (without archival proceedings). The present paper primarily offers an improved presentation: extended and improved introduction, improved motivation, better examples, improved formulation of definitions and theorems, simplified notation, and more discussions of related work. The technical content is essentially as in the original DMAP paper, except we now compare implicitly coordinated plans to standard sequential plans,  we now formally derive the correct notion of an implicitly coordinated plan from the natural conditions it should satisfy, and we have added a proposition that gives a semantic characterisation of implicitly coordinated policies. 


\section{Theoretical Background}
To represent planning tasks as the `apartment borrowing' example of the introduction, we need a formal framework where: (1) agents can reason about the knowledge and ignorance of other agents; (2) both fully and partially observable actions can be described in a compact way (Bob doesn't see Anne placing the key under the mat). Dynamic Epistemic Logic (DEL) satisfies these conditions. We first briefly recapitulate the foundations of DEL, following the conventions of Bolander and Andersen~\cite{bolander11}. What is new in this exposition is mainly the parts on perspective shifts in Section~\ref{sect:perspectiveshifts}.

We now define epistemic languages, epistemic states and epistemic actions. All of these are defined relative to a given finite set of \emph{agents} $\agents$ and a given finite set of \emph{atomic propositions} $P$. To keep the exposition simple, we will not mention the dependency on $\agents$ and $P$ in the following.

\subsection{Epistemic Language and Epistemic States}
\begin{definition}
  The {\em epistemic language} $\lang$ is
  $\phi ~::=~ \top ~|~ \bot ~|~ p ~|~ \neg \phi ~|~ \phi \land \phi ~|~ K_i \phi ~|~ C \phi$
  where $p \in P$ and $i \in \agents$.
\end{definition}
We read $K_i \phi$ as ``agent $i$ knows $\phi$'' and $C \phi$ as ``it
is common knowledge that $\phi$''. 

\begin{definition}
  An {\em epistemic model} is $\model =
  \tpl W, (\sim_i)_{i \in \agents}, V \tpr$ where
  \begin{itemize}
  \item The \emph{domain} $W$ is a non-empty finite set of
    \emph{worlds}.
  \item $\sim_i\ \subseteq W \times W$ is an equivalence relation called
    the \emph{indistinguishability relation} for agent $i$.
  \item $V : P \to \mathcal{P}(W)$ assigns a \emph{valuation} to each atomic
    proposition.
  \end{itemize}

  For $W_d \subseteq W$, the pair $(\model,W_d)$ is called an
  \emph{epistemic state} (or simply a \emph{state}), and the worlds of
  $W_d$ are called the \emph{designated worlds}. A state is
  called \emph{global} if $W_d = \{ w \}$ for some world $w$ (called the \emph{actual world}), and we then often write $(\model,w)$ instead of $(\model, \{w\})$. We use $\sglobal$ to denote the set of global states. For any state $s = (\model, W_d)$, we let $\globals(s) = \{ (\model, w) \mid w \in W_d \}$.
  A
  state $(\model,W_d)$ is called a \emph{local state} for agent $i$ if
  $W_d$ is closed under $\sim_i$. A local state for $i$ is \emph{minimal} if
  $W_d$ is a minimal set closed under $\sim_i$. We use $\smin{i}$ to denote the set of minimal local states of $i$.
    Given a state $s = (\model, W_d)$, the \emph{associated local state} of agent $i$,
  denoted $s^i$, is $(\model, \{ v \mid v \sim_i w \text{ and } w \in W_d
  \})$.
\end{definition}

\def\nls{\\}
\begin{definition}
  Let $(\model,W_d)$ be a state with $\model = \tpl W,
  (\sim_i)_{i \in \agents}, V \tpr $. For $i \in \agents$, $p \in P$ and
  $\phi, \psi \in \lang$, we define truth as follows (with the propositional cases being standard and hence left out):
  \begin{align*}
    (\model, W_d) &\models \phi &\myiff &(\model,w) \models \phi\ \text{ for all } w \in W_d \nls
    (\model, w) &\models K_i \phi &\myiff& (\model,w') \models \phi\ \text{ for all } w' \sim_i w \nls
    (\model, w) &\models C\phi &\myiff& \model,w' \models \phi\ \text{ for all } w' \sim^\ast w
  \end{align*}
  where $\sim^\ast$ is the transitive closure of\ \ $\bigcup_{i \in \agents} \sim_i$.
\end{definition}

\begin{example}\label{exam:states} \upshape
 Let $\agents = \{ \textit{Anne}, \textit{Bob} \}$ and $P = \{ m \}$, where $m$ is intended to express that the key is under the door mat.
 Consider the following global state $s = (\model,w)$, where
  the nodes represent worlds, the edges represent the
  indistinguishability relations (reflexive edges left out), and
  \raisebox{0pt}{\scalebox{1}{\onestate{}{}{}}} is used for designated worlds:
\begin{center}
  $s = (\model, w) = \twoindist{$w:m$}{$v:$}{\itshape Bob}{}{}{}{white}{}{2}$
\end{center}
  %
Each node is labeled by the name of the world, and the list of atomic
propositions true at the world. The state represents a situation where the key
is under the door mat ($m$ is true at the actual world $w$), but Bob considers
it possible that it isn't ($m$ is not true at the world $v$ indistinguishable
from $w$ by Bob). We can verify that in this state Anne knows that the key is
under the mat, Bob doesn't, and Anne knows that he doesn't: $s \models
K_{\textit{Anne}} m \wedge \neg K_{\textit{Bob}} m \wedge K_\textit{Anne} \neg
K_\textit{Bob} m$. The fact that Bob does not know the key to be under the mat
can also be expressed in terms of local states. Bob's local perspective on the
state $s$ is his associated local state $s^\textit{Bob} = (\model, \{w,v\})$. We
have $s^\textit{Bob} \not\models m$, signifying that from Bob's perspective, $m$
can not be verified.
\end{example}

\subsection{Perspective Shifts}\label{sect:perspectiveshifts}

In general, given a global state $s$, the associated local state $s^i$
will represent agent $i$'s internal perspective on that state. Going from
$s^i$ to $(s^i)^j$ amounts to a \emph{perspective shift}, where agent $i$'s
perspective on the perspective of agent $j$ is taken.
In Example~\ref{exam:states}, Anne's
perspective on the state $s$ is $s^\textit{Anne}$, which is $s$ itself. Bob's
perspective is $s^{\textit{Bob}} = (\model, \{w,v\})$. When Anne wants to reason
about the available actions to Bob, e.g. whether he will be able to take the key
from under the door mat or not, she will have to shift to his perspective, i.e.
reason about what holds true in
$(s^\textit{Anne})^\textit{Bob}$, which is the same as $s^\textit{Bob}$ in
this case. This type of perspective shift
is going to be central in our notion of implicitly coordinated plans, since it
is essential to the ability of an agent to reason about other agents' possible
contributions to a plan \emph{from their own perspective}. As the introductory
example shows, this ability is essential: If Anne can not reason about Bob's
contribution to the overall plan from his own perspective, she will not realize
that she needs to call him to let him know where the key is.

Note that any local state $s$ induces a unique set of global states, $\globals(s)$, and that we can hence choose to think of $s$ as a compact representation of the ``belief state'' $\globals(s)$ over global states.

We have the following basic properties concerning perspective shifts (associated local states), where the third follows
directly from the two first:

\begin{proposition}\label{lemma:localisation}
  Let $s$ be a state, $i \in
  \agents$ and $\phi \in \lang$.
  \begin{enumerate}
  \item $s^i \models \phi$ iff $s \models K_i
    \phi$.
  \item If $s$ is local for agent $i$ then $s^i
    = s$.
  \item If $s$ is local for agent $i$ then $s
    \models \phi$ iff $s \models K_i \phi$.\qed
  \end{enumerate}
\end{proposition}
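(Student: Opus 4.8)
The plan is to establish the three items in the order given, obtaining item~3 for free from items~1 and~2, exactly as the proposition's phrasing anticipates. Throughout I write $s = (\model, W_d)$ with $\model = \tpl W, (\sim_i)_{i \in \agents}, V \tpr$, and I unfold the definition of the associated local state as $s^i = (\model, W_d^i)$ where $W_d^i = \mysetw{v}{v \sim_i w \myst w \in W_d}$.

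For item~1 I would simply expand both sides against the truth definition and observe they quantify over the same worlds. The left-hand side $s^i \models \phi$ means $(\model, v) \models \phi$ for every $v \in W_d^i$, i.e.\ for every $v$ such that $v \sim_i w$ for some $w \in W_d$. The right-hand side $s \models K_i \phi$ means $(\model, w) \models K_i \phi$ for every $w \in W_d$, which by the clause for $K_i$ means $(\model, v) \models \phi$ for every $w \in W_d$ and every $v$ with $v \sim_i w$. These two statements range over exactly the same set of worlds $v$ (the union of the $\sim_i$-classes of the worlds in $W_d$), so they are equivalent; the only manoeuvre is the routine one of collapsing a bounded existential nested inside a universal into a single universal.

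For item~2, assume $s$ is local for $i$, i.e.\ $W_d$ is closed under $\sim_i$. It suffices to show $W_d^i = W_d$, since then $s^i = (\model, W_d^i) = (\model, W_d) = s$. The inclusion $W_d \subseteq W_d^i$ uses reflexivity of $\sim_i$: any $w \in W_d$ satisfies $w \sim_i w$, hence $w \in W_d^i$. The reverse inclusion $W_d^i \subseteq W_d$ is precisely closure under $\sim_i$: if $v \sim_i w$ with $w \in W_d$, then $v \in W_d$. For item~3, assume once more that $s$ is local for $i$; by item~2 we have $s^i = s$, and substituting this into item~1 gives $s \models \phi$ iff $s \models K_i \phi$.

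There is no genuine obstacle: every step is a direct appeal to the definitions of the associated local state, of truth at a (possibly multi-pointed) state, and of a local state. The only spot that warrants a second's attention is the invocation of reflexivity of $\sim_i$ for one of the two inclusions in item~2; the rest is pure quantifier bookkeeping.
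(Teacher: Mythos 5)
Your proof is correct and matches the paper's intent: the paper states these as basic properties with the proof omitted as routine (noting only that item~3 follows directly from items~1 and~2, exactly as you argue), and your unfolding of the truth definition for item~1 and the closure/reflexivity argument for item~2 is the standard argument the paper leaves implicit.
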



\subsection{Dynamic Language and Epistemic Actions}
To model actions, like announcing locations of keys and picking them up, we use the \emph{event models} of DEL.

\begin{definition}
  An {\em event model} is $\myevent = \tpl E, (\sim_i)_{i \in
    \agents}, \pre, \post \tpr$ where
  \begin{itemize}
  \item The \emph{domain} $E$ is a non-empty finite set of \emph{events}.
  \item $\sim_i\ \subseteq E \times E$ is an equivalence relation called
    the \emph{indistinguishability relation} for agent $i$.
  \item $\pre : E \to \lang$ assigns a \emph{precondition} to each
    event.
  \item $\post : E \to \lang$ assigns a \emph{postcondition} to each
    event. For all $e\in E$, $post(e)$ is a conjunction of literals
    (atomic propositions and their negations, including $\top$).
  \end{itemize}
  For $E_d \subseteq E$, the pair $(\myevent,E_d)$ is called an
  \emph{epistemic action} (or simply an \emph{action}), and the events in
  $E_d$ are called the \emph{designated events}. An action is called \emph{global} if $E_d = \{e \}$ for some event $e$, and we then often write $(\myevent,e)$ instead of $(\myevent, \{ e \})$.
   Similar to
  states, $(\myevent,E_d)$ is called a \emph{local action} for agent $i$
  when $E_d$ is closed under $\sim_i$.
\end{definition}

Each event of an epistemic action represents a different possible
outcome. By using multiple events $e, e' \in E$ that are
indistinguishable (i.e. $e\sim_ie'$), it is possible to obfuscate the
outcomes for some agent $i \in \agents$, i.e. modeling partially
observable actions. Using event models with $| E_d | > 1$, it is also
possible to model sensing actions (where a priori, multiple outcomes are
considered possible), and nondeterministic actions~\cite{bolander11}.

The {\em product update} is used to specify the successor state
resulting from the application of an action in a state.

\begin{definition}
  Let a state $s = (\model, W_d)$ and an action $a = (\myevent, E_d)$ be given with $\model = \tpl W, (\sim_i)_{i \in \agents}, V\tpr$ and
  $\myevent = \tpl E, (\sim_i)_{i\in \agents}, \pre, \post \tpr$. Then the
  \emph{product update} of $s$ with $a$ is $s \otimes a  = (\tpl W', (\sim'_i)_{i \in \agents}, V' \tpr, W'_d)$ where
  \begin{itemize}
  \item $W' = \mysetw{(w,e) \in W \times E}{\model, w \models
    \pre(e)}$
  \item $\sim'_i\ = \mysetw{((w,e),(w',e')) \in W' \times W'}{w\sim_iw'
    \myand e \sim_ie'}$
  \item $V'(p) = \{ (w,e) \in W' \mid \post(e) \models p \text{ or }
    (\model,w \models p \ \myand \post(e) \not\models \neg p) \}$
  \item $W'_d = \mysetw{(w,e) \in W'}{w \in W_d \text{ and } e \in E_d}$.
  \end{itemize}
\end{definition}
\begin{example}\label{exam:take} \upshape
  Consider the following epistemic action $\textit{try-take} = (\myevent, \{e , f \})$,
  using the same conventions as for epistemic models, except each
  event $e$ is labeled by $\langle \pre(e), \post(e) \rangle$:
  \begin{center}
  $\textit{try-take} = (\myevent,\{e,f\}) =
  \twoindist{$e:\langle m, h \land \neg m \rangle$}
            {$f: \langle \neg m, \top \rangle$}{\itshape Anne}{}{}{}{}{}{2.5}$
  \end{center}
  It represents the action of Bob attempting to take the key  from under the mat. The event $e = \langle m, h \land \neg m \rangle$ represents that if the key is indeed under the mat (the precondition $m$ is true), then the result will be that Bob holds the key and it is no longer under the mat (the postcondition $h \land \neg m$ becomes true). The event $f = \langle \neg m, \top \rangle$ represents that if the key is not under the mat, nothing will happen (the postcondition is the trivial one, $\top$). Note the indistinguishability edge for Anne: She is not there to see whether the action is successful or not. Note however that she is still is aware that either $e$ or $f$ happens, so the action represents the situation where she knows that he is attempting to take the key, but not necessarily whether he is successful (except if she already either knows $m$ or knows $\neg m$). 

Letting $s$ denote the state from Example~\ref{exam:states}, we can calculate the result of executing \textit{try-take} in $s$:
\begin{center}
  $s \otimes \textit{try-take} =
  \twoindist{$(w,e):h$}
            {$(v,f):$}{}{}{}{}{white}{white}{2.5}$
\end{center}
  Note that the result is for Bob to have the key and for this to be common knowledge among Anne and Bob ($s \otimes \textit{try-take} \models C h$). So it seems that if we assume initially to be in the state $s$, and want to find a plan to achieve $h$, then the simple plan consisting only of the action \textit{try-take} should work. It is, however, a bit more complicated than that. 
  Let us assume that Bob does strong planning, that is, only looks for plans that are guaranteed to reach the goal. The problem is then that, from Bob's perspective, \textit{try-take} can not be guaranteed to reach the goal. This is formally because we have:
\begin{center}
  $s^{\textit{Bob}} \otimes \textit{try-take} =
  \twoindist{$(w,e):h$}
            {$(v,f):$}{}{}{}{}{}{white}{2.5}$
\end{center}
Both worlds being designated, but distinguishable, means that Bob at \emph{plan time} (before executing the action) considers them both as possible outcomes of $\textit{try-take}$, but is aware that he will at \emph{run time} (after having executed the action) know which one it is (see~\cite{bolander11} for a more full discussion of the plan time/run time distinction). Since the world $(v,f)$ is designated, we have $s^{\textit{Bob}} \otimes \textit{try-take} \not\models h$.
So from Bob's perspective, \textit{try-take} might fail to produce $h$ and is hence not a strong plan to achieve $h$. Intuitively, it is of course simply because he does not, at plan time, know whether the key is under the mat or not.

Since $s^{\textit{Anne}} = s$ and $s \otimes \textit{try-take} \models h$, it might seem that \textit{try-take} is still a strong plan to achieve $h$ from the perspective of Anne. But in fact, it is not, at least not of the implicitly coordinated type we will define below. The issue is, \textit{try-take} is still an action that Bob has to execute, but Anne knows that Bob does not know it to be succesful, and she can therefore not expect him to execute it. The idea is that when Anne comes up with a plan that involves actions of Bob, she should change her perspective to his, in order to find out what he can be expected to do. Technically speaking, it is because $(s^{\textit{Anne}})^{\textit{Bob}} \otimes \textit{try-take} \not\models h$ that the plan is not implicitly coordinated from the perspective of Anne.

  %
\end{example}

We extend the language $\lang$ into the \emph{dynamic language}
$\langd$ by adding a modality $[(\myevent,e)]$ for each global action $(\myevent,e)$.
The truth conditions are extended with the following
standard clause from DEL:
$
(\model,w) \models\ [(\myevent,e)] \phi ~\myiff~
(\model,w) \models \pre(e) \text{ implies } (\model,w) \otimes (\myevent,e) \models \phi
$.

We define the following abbreviations: $$[(\myevent,E_d)]\phi :=
\textstyle\bigwedge_{e \in E_d} [(\myevent,e)] \phi \quad \text{ and } \quad \langle
(\myevent, E_d) \rangle\phi := \neg [(\myevent,E_d)] \neg \phi.$$ We say that
an action $(\myevent,E_d)$ is \emph{applicable} in a state
$(\model,W_d)$ if for all $w \in W_d$ there is an event $e\in E_d$
s.t. $(\model,w) \models \pre(e)$. Intuitively, an action is
applicable in a state if for each possible situation (designated
world), at least one possible outcome (designated event) is
specified.
\begin{example}\label{exam:applicability} \upshape
Consider again the state $s$ from Example~\ref{exam:states} and the action $\textit{try-take}$ from Example~\ref{exam:take}. The action \textit{try-take} is trivially seen to be applicable in the state $s$, since the designated event $e$ has its precondition satisfied in the designated world $w$. The action \textit{try-take} is also applicable in $s^\textit{Bob}$, since 
$(\model,w) \models pre(e)$ and $(\model,v) \models pre(f)$. This shows that \textit{try-take} is applicable from the perspective of Bob. Intuitively, it is so because it is only an action for \emph{attempting} to take the key. Even if the key is not under the mat, the action will specify an outcome (having the trivial postcondition $\top$).

\end{example}

Let $s = (\model,W_d)$ denote an epistemic state and $a
=(\myevent,E_d)$ an action. Andersen~\cite{andersentowards} shows that
$a$ is applicable in $s$ iff $s \models \langle a \rangle \top$, and
that $s \models [ a ] \phi$ iff $s \otimes a \models \phi$. We now
define a further abbreviation: $\after{a} \phi := \langle a \rangle
\top \wedge [a] \phi$. Hence:
\begin{equation} \label{dynmod}
  s \models \after{a} \phi ~\myiff~ a \textnormal{ is applicable in } s \textnormal{ and } s \otimes a \models \phi
\end{equation}
Thus $\after{a} \phi$ means that the application of $a$ is possible
and will (necessarily) lead to a state fulfilling $\phi$.

\section{Cooperative Planning}

\enlargethispage{\baselineskip}
As mentioned in the introduction, in this paper we assume each action to be executable by a single agent, that is, we are not considering joint actions. In our `apartment borrowing' example there are two agents, Anne and Bob. They are supposed to execute different parts of the plan to reach the goal of Bob getting access to the apartment. For instance, Anne is the one to put the key under the mat, and Bob is the one to take it from under the mat. To represent who performs which actions of a plan, we will introduce what we call an \emph{owner function} (inspired by the approach of L\"owe, Pacuit, and Witzel~\cite{lowe11}). An \emph{owner function} is defined to be a mapping $\omega: A \to \agents$, mapping each action
to the agent who can execute it. For any action $a \in A$, we call $\omega(a)$ the \emph{owner} of $a$. Note that by defining the owner function this way, every action has a \emph{unique} owner. This can be done without loss of generality, since we can always include any number of semantically equivalent actions in $A$, if we wish some action to be executable by several agents (e.g., if we want both Anne and Bob to be able to open and close the door).
We can now define epistemic planning tasks.

\leaveout{
In automated planning, a planning task usually consists of an initial state $s_0$, a set of available actions, $A$, and a goal formula $\gamma$. In our setting, all states and actions will be of the epistemic type introduced above, and the goal formula will be a formula of the epistemic language. Furthermore, we need each planning task to specify the relevant owner function. Hence the definition of a planning task becomes as follows.}

\begin{definition}
  A \emph{cooperative planning task} (or simply a \emph{planning
    task}) $\Pi = \tpl s_0,A,\owner,\gamma \tpr$ consists of
  an initial epistemic state $s_0$,
  a finite set of epistemic actions $A$,
  an owner function $\owner : A \to \agents$, and
  a goal formula $\gamma$  of $\lang$.
  Each $a \in A$ has to be local for $\owner(a)$. When $s_0$ is a
  global state, we call it a \emph{global planning task}. When $s_0$ is
  local for agent $i$, we call it a \emph{planning task
    for agent $i$}. Given a planning task $\Pi = \langle s_0, A, \owner, \gamma \rangle$ and a state $s$, we define $\Pi(s) := \langle s , A, \owner, \gamma \rangle$.
     Given a planning task $\Pi = \tpl
  s_0,A,\owner,\gamma \tpr$, the \emph{associated planning task for
    agent $i$} is $\Pi^i = \Pi(s^i)$.
\end{definition}

Given a multi-agent system facing a global planning task $\Pi$,
each individual agent $i$ is facing the planning task $\Pi^i$
(agent $i$ cannot observe the global initial state $s_0$ directly, only the
associated local state $\als{s_0}{i}$).

In the following, we will investigate various possible solution concepts for cooperative planning tasks. The solution to a planning task is called a \emph{plan}. A plan can either be sequential (a sequence of actions) or conditional (a policy). We will first, in Section~\ref{sect:standardsequential}, consider the simplest possible type of solution, a \emph{standard sequential plan}. Then, in Section~\ref{sect:sequentialplans}, we are going to introduce the more complex notion of a \emph{sequential implicitly coordinated plan}, and in Section~\ref{sect:conditionalplans} this will be generalized to \emph{implicitly coordinated policies}.

\subsection{Standard Sequential Plans}\label{sect:standardsequential}

The standard notion of a sequential plan in automated planning is as follows (see, e.g., \cite{ghallab04}). An action sequence $(a_1,\dots,a_n)$ is called a (standard sequential) plan if for every $i \leq n$, $a_i$ is applicable in the result of executing the action sequence $(a_1,\dots,a_{i-1})$ in the initial state, and when executing the entire sequence $(a_1,\dots,a_n)$ in the initial state, a state satisfying the goal formula is reached. Let us transfer this notion into the DEL-based setting of this paper. 
In our setting, the result of executing an action $a$ in a state $s$ is given as $s \otimes a$. Hence, the above conditions for $(a_1,\dots,a_n)$ being a plan can be expressed in the following way in our setting, where $s_0$ denotes the initial state, and $\gamma$ the goal formula: for every $i \leq n$, $a_i$ is applicable in $s_0 \otimes a_1 \otimes \cdots \otimes a_{i-1}$, and $s_0 \otimes a_1 \otimes \cdots \otimes a_n \models \gamma$. Note that by equation (\ref{dynmod}) above, these conditions are equivalent to simply requiring $s_0 \models \after{a_1} \after{a_2} \cdots \after{a_n} \gamma$. Hence we get the following definition.




\begin{definition}\label{defi:standardsequential}
Let $\Pi =  \tpl s_0,A,\owner,\gamma \tpr$ be a planning task. A \emph{standard sequential plan} for $\Pi$ is a sequence $(a_1,\dots,a_n)$ of actions from $A$ satisfying
$
  s_0 \models \after{a_1} \after{a_2} \cdots \after{a_n} \gamma
$.
\end{definition}



This solution concept is equivalent to the one considered
in~\cite{bolander11}. As the following example shows, it is not sufficiently strong to capture the notion of an `implicitly coordinated plan' that we are after. 


\enlargethispage{\baselineskip}
\begin{example}\label{exam:standardsequential} \upshape
Consider again the scenario of Example~\ref{exam:take} where the key is initially under the mat, Bob does not know this, and the goal is for Bob to have the key. The only action available in the scenario is for Bob to attempt to take the key from under the mat. Using the states and actions defined in Examples~\ref{exam:states} and \ref{exam:take}, we can express this scenario as a cooperative planning task $\Pi = \langle s_0, A, \owner, \gamma \rangle$ where $s_0 = s$, $A = \{ \textit{try-take} \}$, $\owner(\textit{try-take}) = \textit{Bob}$, $\gamma = h$. In Example~\ref{exam:take}, we informally concluded that the plan only consisting of \textit{try-take} is a strong plan, since it is guaranteed to reach the goal, but that it is not a strong plan from the perspective of Bob. Given our formal definitions, we can now make this precise as follows:
\begin{enumerate}
  \item The sequence $(\textit{try-take})$ is a standard sequential plan for $\Pi$.
  \item The sequence $(\textit{try-take})$ is \emph{not} a standard sequential plan for $\Pi^\textit{Bob}$.
\end{enumerate}
The first item follows from the fact that \textit{try-take} is applicable in $s$ (Example~\ref{exam:applicability}), and that $s \otimes \textit{try-take} \models h$ (Example~\ref{exam:take}). The second item follows from $s^\textit{Bob} \otimes \textit{try-take} \not\models h$ (Example~\ref{exam:take}).

We also have that $(\textit{try-take})$ is a standard sequential plan for $\Pi^{\textit{Anne}}$, since $s^\textit{Anne} = s$. This proves that the notion of a standard sequential plan is insufficient for our purposes. If Anne is faced with the planning task $\Pi^\textit{Anne}$, she should not be allowed to consider $(\textit{try-take})$ as a sufficient solution to the problem. She should be aware that the action \textit{try-take} is to be executed by Bob, and from Bob's perspective, $(\textit{try-take})$ is \emph{not} a (strong) solution to the planning task (Item 2 above). So we need a way of explicitly integrating perspective-shifts into our notion of a solution to a planning task, and this is what we will do next.




\end{example}

\subsection{Implicitly Coordinated Sequential Plans}\label{sect:sequentialplans}
It follows from Definition~\ref{defi:standardsequential} that $(a_1,\dots,a_n)$ is a standard sequential plan for some planning task $\Pi = \langle s_0,A,\owner,\gamma \rangle$ iff $a_1,\dots,a_n \in A$ and
the formula $\after{a_1} \cdots \after{a_n} \gamma$ is true in $s_0$. 
More generally, we can think of a planning notion as being
defined via a mapping $X$ that takes a plan $\pi$ and a planning task $\Pi$ as parameters, and returns a logical formula $X(\pi,\Pi)$ such that, for all states $s$,  $s \models X(\pi,\Pi)$ iff $\pi$ is a plan for $\Pi(s)$.
In the case of standard sequential plans, it follows directly from Definition~\ref{defi:standardsequential} that $X$ would be defined like this: 
\begin{equation}
   \text{$X((a_1,\dots,a_n),\langle s_0,A,\owner,\gamma \rangle) = \after{a_1} \cdots \after{a_n} \gamma$}
\end{equation}
for all planning tasks $\langle s_0,A,\owner,\gamma \rangle$ and all $a_1,\dots,a_n \in A$.

We now wish to define a similar mapping $Y$, so that $s \models Y(\pi,\Pi)$ iff $\pi$ is an \emph{implicitly coordinated plan} for $\Pi(s)$. Our strategy is to list the natural conditions that $Y$ should satisfy, and then derive the exact definition of $Y$ (and hence implicitly coordinated plans) directly from those.
First of all, the empty action sequence, denoted by $\epsilon$, should be an implicitly coordinated plan iff it satisfies the goal formula, which is expressed by the following simple condition on $Y$:
\begin{equation}\label{equa:basis}
   Y(\epsilon, \langle s_0, A, \owner, \gamma \rangle) = \gamma.
 \end{equation}

For non-empty action sequences, the `apartment borrowing' example studied above gives us the following insights. If Anne is trying to come up with a plan where one of the steps is to be executed by Bob, then Anne has to make sure that Bob can himself verify his action to be applicable, and that he can himself verify that executing the action will lead to a state where the rest of the plan will succeed. More generally, for an action sequence $(a_1,\dots,a_n)$ to be considered implicitly coordinated, the owner of the first action $a_1$ has to \emph{know} that $a_1$ is applicable and will lead to a situation where $(a_2,\dots,a_n)$ is again an implicitly coordinated plan. This leads us directly to the following condition on $Y$, for all planning tasks $\langle s_0,A,\owner,\gamma \rangle$ and all $a_1,\dots,a_n \in A$ with $n \geq 1$:
\begin{equation}\label{equa:step}
\begin{array}{l}
Y((a_1,\dots,a_n), \langle s_0, A, \owner, \gamma \rangle) =
   K_{\owner(a_1)} \after{a_1} Y((a_2,\dots,a_n), \langle s_0, A, \owner, \gamma \rangle)
   \end{array}
\end{equation}
It is now easy to see that any mapping $Y$ satisfying  (\ref{equa:basis}) and (\ref{equa:step}) must necessarily be defined as follows, for all planning tasks $\langle s_0, A, \owner, \gamma \rangle$ and all action sequences $(a_1,\dots,a_n) \in A$:
 \[
  Y((a_1,\dots,a_n), \langle s_0, A, \owner, \gamma \rangle) =
   \kafter{a_1} \kafter{a_2} \cdots \kafter{a_n} \gamma
  \]
This leads us directly to the following definition. 
\begin{definition}\label{defi:implicitlycoordinated}
  Let $\Pi = \tpl s_0, A, \owner, \gamma \tpr$ be a cooperative
  planning task. An \emph{implicitly coordinated plan} for $\Pi$ is a sequence $(a_1,\dots,a_n)$ of actions
  from $A$ such that:
  \begin{equation}
    s_0 \models \kafter{a_1} \kafter{a_2} \cdots \kafter{a_n} \gamma \label{rec}
  \end{equation}
  If $(a_1,\dots,a_n)$ is an implicitly coordinated plan for $\Pi^i$, then it is said to be
  an \emph{implicitly coordinated plan for agent $i$} to the planning task $\Pi$.
\end{definition}
Note that the formula used to define implicitly coordinated plans above 
is \emph{uniquely} determined by the natural conditions (\ref{equa:basis}) and (\ref{equa:step}).

The following proposition gives a more structural, and semantic, characterization of
implicitly coordinated plans. It becomes clear that such plans
can be found by performing a breadth-first search over the set of
successively applicable actions, shifting the perspective for each
state transition to the owner of the respective action.
\begin{proposition} \label{slem}
  For a cooperative planning task $\Pi = \tpl s_0, A, \owner, \gamma
  \tpr$, a non-empty sequence $(a_1,\dots,a_n)$ of actions from $A$  is an implicitly coordinated plan for $\Pi$ iff $a_1$ is applicable in $s_0^{\owner(a_1)}$ and $(a_2,\dots,a_n)$ is an implicitly coordinated plan for $\Pi(s_0^{\owner(a_1)} \otimes a_1)$.
\end{proposition}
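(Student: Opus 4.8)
The plan is to unfold Definition~\ref{defi:implicitlycoordinated} and push the first conjunct through a perspective shift using Proposition~\ref{lemma:localisation}. By definition, $(a_1,\dots,a_n)$ is an implicitly coordinated plan for $\Pi$ iff $s_0 \models \kafter{a_1}\kafter{a_2}\cdots\kafter{a_n}\gamma$, i.e., writing $\psi := \after{a_2}K_{\owner(a_2)}\cdots\kafter{a_n}\gamma$ (or $\psi := \gamma$ when $n=1$), iff $s_0 \models K_{\owner(a_1)}\after{a_1}\psi$. The first step is to apply item~1 of Proposition~\ref{lemma:localisation} to rewrite this as $s_0^{\owner(a_1)} \models \after{a_1}\psi$.

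Next I would invoke the characterisation of the dynamic modality in equation~(\ref{dynmod}): $s_0^{\owner(a_1)} \models \after{a_1}\psi$ holds iff $a_1$ is applicable in $s_0^{\owner(a_1)}$ \emph{and} $s_0^{\owner(a_1)}\otimes a_1 \models \psi$. This already isolates the applicability condition from the statement. It remains to show that the second conjunct, $s_0^{\owner(a_1)}\otimes a_1 \models \psi$, is exactly the assertion that $(a_2,\dots,a_n)$ is an implicitly coordinated plan for $\Pi(s_0^{\owner(a_1)}\otimes a_1)$. For $n \geq 2$ this is immediate by unfolding Definition~\ref{defi:implicitlycoordinated} again with initial state $s_0^{\owner(a_1)}\otimes a_1$, since $\psi = \kafter{a_2}\cdots\kafter{a_n}\gamma$ and the action set, owner function and goal are unchanged. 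For $n=1$, $\psi = \gamma$, and $s_0^{\owner(a_1)}\otimes a_1 \models \gamma$ says precisely that the empty sequence is an implicitly coordinated plan for $\Pi(s_0^{\owner(a_1)}\otimes a_1)$, which is the correct base case reading. Chaining these equivalences yields the claim.

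The only point requiring a little care — and the main (mild) obstacle — is the application of Proposition~\ref{lemma:localisation}(1): that proposition is stated for $\phi \in \lang$, whereas $\after{a_1}\psi$ lives in the dynamic language $\langd$. However, by Andersen's results quoted just before equation~(\ref{dynmod}), $s \models \after{a}\phi$ iff ($a$ applicable in $s$ and $s\otimes a \models \phi$), and $\after{a}\phi$ can be eliminated in favour of a static formula; alternatively one checks directly that the equivalence $s \models K_i\chi$ iff $s^i \models \chi$ extends to $\chi \in \langd$ because the truth of $K_i\chi$ at $(\model,w)$ depends only on the $\sim_i$-class of $w$, and $s^i$ consists exactly of the $\sim_i$-closure of the designated worlds of $s$. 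Either route suffices; I would take the second, remarking that the proof of Proposition~\ref{lemma:localisation}(1) goes through verbatim for dynamic formulas. Everything else is a routine chain of ``iff''s.
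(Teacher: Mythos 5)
Your proposal is correct and takes essentially the same route as the paper, which only sketches the argument by pointing to equation~(\ref{dynmod}), Proposition~\ref{lemma:localisation} and the defining condition~(\ref{rec}) --- precisely the chain of equivalences you spell out, including the worthwhile remark that Proposition~\ref{lemma:localisation}(1) extends verbatim to formulas of $\langd$. The only blemish is a typo in your first display, where $\psi$ is written as $\after{a_2}K_{\owner(a_2)}\cdots\kafter{a_n}\gamma$ instead of the correct $\kafter{a_2}\cdots\kafter{a_n}\gamma$, which is what you actually use in the rest of the argument.
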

The proposition can be seen as a semantic counterpart of (\ref{equa:step}), and is easily proven using (\ref{dynmod}),
Proposition~\ref{lemma:localisation} and (\ref{rec}).

\begin{example} \upshape
  Consider again the planning task $\Pi = \langle s_0, A, \owner, \gamma \rangle$ of Example~\ref{exam:standardsequential}
   with
  $s_0 = s$, $A = \{ \textit{try-take} \}$, $\owner(\textit{try-take}) = \textit{Bob}$, $\gamma = h$. In Example~\ref{exam:standardsequential} we concluded that $(\textit{try-take})$ is a standard sequential plan for $\Pi$. From Example~\ref{exam:take}, we have that $s^\textit{Bob} \otimes \textit{try-take} \not\models h$, and hence $s \not\models K_{\owner(\textit{try-take})} \after{\textit{try-take}} h$ (using (\ref{dynmod}) and Proposition~\ref{lemma:localisation}). This shows that, as expected, (\textit{try-take}) is not an implicitly coordinated plan for $\Pi$.

In the introduction, we noted that the solution to this problem would be for Anne to make sure to announce the location of the key to Bob. Let us now treat this formally within the developed framework. We need to give Anne the possibility of announcing the location of the key, so we define a new planning task $\Pi' = \langle s_0, A', \owner, \gamma \rangle$ with $A' = \{ \textit{try-take}, \textit{announce} \}$. Here \textit{announce} is the action \onestate{}{}{}~$e: \langle m, \top \rangle$
        with $\owner(\textit{announce}) = \textit{Anne}$. In DEL, this action is known as a \emph{public announcement} of $m$.
        It can  now  easily be formally verified that
        \[
          s \models K_{\textit{Anne}} \after{\textit{announce}} K_\textit{Bob} \after{\textit{try-take}} h.
        \]
        In words: Anne knows that she can announce the location of the key, and that this will lead to a situation where Bob knows he can attempt to take the key, and he knows that he will be successful in this attempt. In other words, $(\textit{announce},\textit{try-take})$ is indeed an implicitly coordinated plan to achieve that Bob has the key, consistent with our informal analysis in the introduction of the paper.

\end{example}

\begin{example} \upshape \label{rex}
  Consider a situation with agents
  $\mathcal{A} = \{1,2,3\}$ where a letter is to be passed from agent
  1 to one of the other two agents, possibly via the third agent.
  Mutually exclusive propositions $\at1, \at2, \at3 \in P$ are used to
  denote the current carrier of the letter, while $\for1, \for2, \for3
  \in P$ denote the addressee. In our example, agent $1$ has a letter
  for agent $3$, so $\at1$ and $\for3$ are initially true.
  \begin{center}
  $s_0 = \twoindist{$\at1,\for2$}{$\at1,\for3$}{$2,3$}{}{}{white}{}{}{2}$
  \end{center}
  In $s_0$, all agents know that agent $1$ has the letter ($\at1$),
  but agents $2$ and $3$ do not know who of them is the addressee
  ($\for2$ or $\for3$).  We assume that agent 1 can only exchange
  letters with agent 2 and agent 2 can only exchange letters with
  agent 3. We thus define the four possible actions $a_{12}, a_{21},
  a_{23}, a_{32}$, with $a_{ij}$ being the composite action of agent
  $i$ publicly passing the letter to agent $j$ and privately informing
  him about the correct addressee (the name of the addressee is on the envelope, but only visible to the receiver). I.e.
  \begin{center}
  $ a_{ij} = \twoindist
{$\left<\at i \land \for 2, \neg \at i \land \at j \right>$}
{$\left<\at i \land \for 3, \neg \at i \land \at j \right>$}
{$\agents \setminus \left\{j\right\}$}{}{}{}{}{}{4} $
  \end{center}
  Given that the joint goal is to pass a letter to its addressee, the
  global planning task then is $\Pi = \left<s_0, A, \owner,
  \gamma \right>$ with
  $A = \left\{ a_{12}, a_{21}, a_{23}, a_{32} \right\}$,
  $\owner(a_{ij})  = i$ for all $i,j$, and
  $\gamma = \bigwedge_{i \in \myset{1,2,3}}\ (\for i \to \at i)$.
%
  Consider the action sequence $(a_{12},a_{23})$: Agent 1 passes the
  letter to agent 2, and agent 2 passes it on to agent 3. It can now
  be verified that
    $\als{s_0}{1} \models K_1 \after{a_{12}} K_2 \after{a_{23}} \gamma$ and 
    $\als{s_0}{i} \not\models K_1 \after{a_{12}} K_2 \after{a_{23}} \gamma$ for $i=2,3$.
  Hence $(a_{12},a_{23})$ is an implicitly coordinated plan for
  agent $1$, but not for agents $2$ and $3$.
  This is because in the beginning agents $2$ and $3$ do not know to
  whom of them the letter is intended and hence cannot verify that
  $(a_{12}, a_{23})$ will lead to a goal state. However, after agent
  $1$'s execution of $a_{12}$, agent $2$ can distinguish between the
  possible addressees at run time, and find his subplan $(a_{23})$, as
  contemplated by agent $1$.
\end{example}


\subsection{Conditional Plans}\label{sect:conditionalplans}

\emph{Sequential} plans are often not sufficient to solve a given epistemic
planning task. In particular, as soon as branching on nondeterministic
action outcomes or obtained observations becomes necessary, we need
\emph{conditional} plans to solve such a task.  
%
Unlike Andersen, Bolander, and Jensen~\cite{andersen12}, who represent
conditional plans as action trees with branches depending on knowledge
formula conditions, we represent them as policy functions $(\pi_i)_{i
  \in \agents}$, where each $\pi_i$ maps minimal local states of agent $i$ into actions of agent $i$.
We now define two different types of policies, \emph{joint policies}
and \emph{global policies}, and later show them to be equivalent.

\begin{definition}[Joint policy] \label{defpolicy}
  Let $\Pi = \tpl s_0, A, \owner, \gamma \tpr$ be a cooperative
  planning task. Then a \emph{joint policy} $(\pi_i)_{i \in
    \agents}$ consists of partial functions $\pi_i : \smin{i} \to A$
  satisfying for all states $s$ and actions $a$: if $\pi_i(s) = a$
 then
  $\owner(a) = i$ and $a$ is
  applicable in $s$.
\end{definition}

\begin{definition}[Global policy] \label{defglobalpolicy}
  Let $\Pi = \tpl s_0, A, \owner, \gamma \tpr$ be a cooperative
  planning task. Then a \emph{global policy} $\pi$ is a mapping
  $\pi: \sglobal \to \mathcal{P}(A)$ satisfying the requirements
  \emph{knowledge of preconditions} \textsc{(kop)},
  \emph{per-agent determinism} \textsc{(det)}, and
  \emph{uniformity} \textsc{(unif)}:
  \begin{itemize}
  \item[\textsc{(kop)}] For all $s \in \sglobal$, $a \in \pi(s)$: $a$ is applicable in
    $\als{s}{\owner(a)}$.
  \item[\textsc{(det)}] For all $s \in \sglobal, a, a' \in \pi(s)$ with
      $\owner(a) = \owner(a')$: $a = a'$.
  \item[\textsc{(unif)}] For all $s, t \in \sglobal, a \in \pi(s)$
    with $\als{s}{\owner(a)} = \als{t}{\owner(a)}$: $a \in \pi(t)$.
  \end{itemize}
\end{definition}

\begin{proposition}\label{policyreq}
  Any joint policy $(\pi_i)_{i \in \agents}$ induces a global policy
  $\pi$ given by
  \[
  \pi(s) = \mysetw{\pi_i(\als{s}{i})}{i \in \agents \text{ and } \pi_i(\als{s}{i}) \text{ is defined}}.
  \]
  Conversely, any global policy $\pi$ induces a joint policy
  $(\pi_i)_{i\in \agents}$ given by
  \[
  \pi_i (s^i ) = a ~~ \text{ for all  } (s,A') \in \pi, ~a \in A' \text{ with } \owner(a) = i.
  \]
  Furthermore, the two mappings $(\pi_i)_{i \in \agents} \mapsto \pi$ (mapping joint policies to induced global policies) and $\pi \mapsto (\pi_i)_{i \in \agents}$ (mapping global policies to induced joint policies) are each other's inverse.
\end{proposition}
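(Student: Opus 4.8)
The plan is to check the three assertions in turn, relying throughout on one elementary observation: the minimal local states of agent $i$ are exactly the states of the form $s^i$ with $s$ a global state. Indeed, if $(\model,W_d) \in \smin{i}$ then $W_d$ is a single $\sim_i$-equivalence class, so for any $w \in W_d$ we have $(\model,w)^i = (\model,W_d)$; conversely $s^i$ is always a $\sim_i$-equivalence class and hence minimal. This is what lets the two constructions talk about the same domains.

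First, given a joint policy $(\pi_i)_{i\in\agents}$, I would verify that $\pi$ defined by $\pi(s) = \mysetw{\pi_i(s^i)}{i \in \agents,\ \pi_i(s^i)\text{ defined}}$ is a global policy. Clearly $\pi(s) \subseteq A$. For \textsc{(kop)}: if $a \in \pi(s)$ then $a = \pi_i(s^i)$ for some $i$, and the defining property of a joint policy gives $\owner(a) = i$ and $a$ applicable in $s^i = s^{\owner(a)}$. For \textsc{(det)}: if $a, a' \in \pi(s)$ with $\owner(a) = \owner(a') = i$, then since $\owner(\pi_j(s^j)) = j$ forces the index to be $i$ in both cases, $a = \pi_i(s^i) = a'$. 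For \textsc{(unif)}: if $a \in \pi(s)$, $i = \owner(a)$, and $s^i = t^i$, then $a = \pi_i(s^i) = \pi_i(t^i) \in \pi(t)$. Conversely, given a global policy $\pi$, I would check that the prescription $\pi_i(s^i) = a$ for all $(s,A') \in \pi$ and $a \in A'$ with $\owner(a) = i$ is a \emph{well-defined} partial function — this is the one place where \textsc{(det)} and \textsc{(unif)} must be combined. If $(s,A'),(t,A'') \in \pi$ with $a \in A'$, $a' \in A''$, $\owner(a) = \owner(a') = i$ and $s^i = t^i$, then \textsc{(unif)} (applied to $a \in \pi(s)$ and $s^{\owner(a)} = t^{\owner(a)}$) gives $a \in \pi(t) = A''$, and then \textsc{(det)} for $t$ gives $a = a'$. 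The remaining requirements on a joint policy are immediate: the domain of $\pi_i$ lies in $\smin{i}$ by the observation above, $\pi_i$ takes values in $A$ and in actions owned by $i$ by construction, and applicability of $\pi_i(s^i)$ in $s^i$ is precisely \textsc{(kop)}.

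Finally, for mutual inverseness I would unfold the two round-trips. Starting from a joint policy $(\pi_i)$, passing to $\pi$, and back to $(\pi_i')$: if $a = \pi_i(s^i)$ is defined then $a \in \pi(s)$ with $\owner(a) = i$, so $\pi_i'(s^i) = a$; conversely if $\pi_i'(u) = a$, writing $u = t^i$ for some global $t$ (by the observation), we have $a \in \pi(t)$ with $\owner(a) = i$, that is, $a = \pi_j(t^j)$ for some $j$, and $\owner(\pi_j(t^j)) = j$ forces $j = i$, so $a = \pi_i(t^i) = \pi_i(u)$; hence $\pi_i' = \pi_i$. Starting from a global policy $\pi$, passing to $(\pi_i)$, and back to $\pi'$: for every global $s$, $a \in \pi'(s)$ iff $a = \pi_i(s^i)$ for some $i$, iff there is $(t,A') \in \pi$ with $a \in A'$, $\owner(a) = i$ and $t^i = s^i$ for some $i$, which by \textsc{(unif)} is equivalent to $a \in \pi(s)$; hence $\pi' = \pi$. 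I expect the only step requiring genuine care to be the well-definedness of the joint policy induced by a global policy, since it is where \textsc{(det)} and \textsc{(unif)} interact; everything else is routine unfolding of the definitions.
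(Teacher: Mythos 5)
Your proof is correct and follows essentially the same route as the paper: verify \textsc{(kop)}, \textsc{(det)} and \textsc{(unif)} for the induced global policy, observe that \textsc{(kop)} gives applicability and that \textsc{(det)} together with \textsc{(unif)} gives right-uniqueness of each $\pi_i$ in the converse direction, and then check the two round-trips. You merely spell out in full two points the paper dispatches briefly (the well-definedness argument combining \textsc{(det)} and \textsc{(unif)}, and the mutual-inverse verification), which is a faithful elaboration rather than a different approach.
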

\enlargethispage{1.5\baselineskip}
\begin{proof}
  First we prove that the induced mapping $\pi$ as defined above is a
  global policy. Condition \textsc{(kop)}: If $a \in \pi(s)$ then $\pi_i(s^i) =
  a$ for some $i$, and by definition of joint policy this implies $a$
  is applicable in $s^i$. Condition \textsc{(det)}:
  Assume $a,a' \in \pi(s)$ with $\owner(a) =
  \owner(a')$. By definition of $\pi$ we have $\pi_i(s^i) = a$ and
  $\pi_j(s^j) = a'$ for some $i,j$. By definition of joint policy,
  $\owner(a) = i$ and $\owner(a') = j$. Since $\owner(a)= \owner(a')$
  we get $i=j$ and hence $\pi_i(s^i) = \pi_j(s^j)$. This implies $a =
  a'$. Condition \textsc{(unif)}: Assume $a\in \pi(s)$ and $s^{\owner(a)} =
  t^{\owner(a)}$. By definition of $\pi$ and joint policy, we get
  $\pi_i (s^i ) = a$ for $i = \owner(a)$. Thus $s^i = t^i$, and since
  $\pi_i (s^i) = a$, we immediately get $\pi_i (t^i) = a$ and hence $a
  \in \pi(t)$.
  We now prove that the induced mappings $(\pi_i)_{i \in \agents}$
  defined above form a joint policy. Constraint \textsc{(kop)} ensures the
  applicability property as required by Definition \ref{defpolicy},
  while the constraints \textsc{(det)} and \textsc{(unif)} ensure the
  right-uniqueness of each partial function $\pi_i$.
  It is easy to show that the two mappings $(\pi_i)_{i \in \agents} \mapsto \pi$ and $\pi \mapsto (\pi_i)_{i \in \agents}$ are each other's inverse, using their definition.
\end{proof}

By Proposition~\ref{policyreq}, we can identify joint and global
policies, and will in the following move back and forth between the
two. Notice that Definitions~\ref{defpolicy}~and~\ref{defglobalpolicy}
allow a policy to distinguish between modally equivalent states. A
more sophisticated definition avoiding this is possible, but is beyond
the scope of this paper.
Usually, a policy $\pi$ is only considered to be a solution to a
planning task if it is closed in the sense that $\pi$ is defined for
all non-goal states reachable following $\pi$. Here, we want to
distinguish between two different notions of closedness: one that
refers to all states reachable from a centralized perspective, and one
that refers to all states considered reachable when tracking
perspective shifts. To that end, we distinguish between centralized
and perspective-sensitive successor functions.

  We take a  \emph{successor function} to be any function $\succ : \sglobal \times A \to
  \mathcal{P}(\sglobal)$. Successor functions are intended to map pairs of states $s$ and actions $a$ into the states $\succ(s,a)$ that can result from executing $a$ in $s$. Which states can result from executing $a$ in $s$ depend on whether we take the objective, centralized view, or whether we take the subjective view of an agent. An agent might subjectively consider more outcomes possible than are objectively possible. We define the \emph{centralized successor function} as
$\succ_{cen}(s,a) = \globals(s \otimes a)$.
It specifies the global states that are possible after the application
of $a$ in $s$. If closedness of a global policy $\pi$ based on the
centralized successor function is required, then no execution of $\pi$
will ever lead to a non-goal state where $\pi$ is undefined.
Like for sequential planning, we are again interested in the
decentralized scenario where each agent has to plan and decide when
and how to act by himself under incomplete knowledge. We achieve this
by encoding the perspective shifts to the next agent to act in the
\emph{perspective-sensitive successor function}
$\succ_{ps}(s,a) = \globals(\als{s}{\owner(a)} \otimes a)$.
Unlike $\succ_{cen}(s,a)$, $\succ_{ps}(s,a)$ considers a state
$s'$ to be a successor of $s$ after application of $a$ if \emph{agent
  $\owner(a)$ considers $s'$ possible} after the application of $a$,
not only if \emph{$s'$ is actually possible from a global perspective}. Thus,
$\succ_{cen}(s,a)$ is always a (possibly strict) subset of
$\succ_{ps}(s,a)$, and a policy $\pi_{ps}$ that is closed
wrt.\ $\succ_{ps}(s,a)$ must be defined for \emph{at least} the
states for which a policy $\pi_{cen}$ that is closed
wrt.\ $\succ_{cen}(s,a)$ must be defined. This corresponds to the
intuition that solution existence for decentralized planning with
implicit coordination is a stronger property than solution existence
for centralized planning.
For both successor functions, we can now formalize what a strong
solution is that can be executed by the agents. Our
notion satisfies the usual properties of strong plans
\cite{cimatti03weak}, namely closedness, properness and acyclicity.

\begin{definition}[Strong Policy]
  Let $\Pi = \tpl s_0,A,\owner,\gamma \tpr$ be a cooperative planning
  task and $\succ$ a successor function. A global policy $\pi$ is
  called a \emph{strong policy} for $\Pi$ with respect to $\succ$ if
  \begin{compactenum}[(i)]
  \item \emph{Finiteness:} $\pi$ is finite.
  \item \emph{Foundedness:} for all $s \in \globals(s_0)$, 
    \begin{inparaenum}[(1)]
    \item $s \models \gamma$, or
    \item $\pi(s) \neq \emptyset$.
\end{inparaenum}
  \item \emph{Closedness:} for all $(s,A') \in \pi$, $a \in A', s' \in \succ(s,a)$, 
    \begin{inparaenum}[(1)]
    \item $s' \models \gamma$, or
    \item $\pi(s') \neq \emptyset$.
    \end{inparaenum}
  \end{compactenum}
  \label{def:strongpolicy}
\end{definition}

Note that we do not explicitly require acyclicity, since this is
already implied by a literal interpretation of the product update
semantic that ensures unique new world names after each update. It
then follows from (i) and (iii) that $\pi$ is proper. We call strong
plans with respect to $\succ_{cen}$ {\em centralized policies} and
strong plans with respect to $\succ_{ps}$ {\em implicitly coordinated
policies}.


\enlargethispage{1.5\baselineskip}
Analogous to Proposition~\ref{slem}, we want to give a semantic characterization
of implicitly coordinated policies. For this, we first define a \emph{successor of a
state $s_0$ by following a policy $\pi$} to be a state $s \in \globals(s'_0
\otimes a)$ for arbitrary states $s'_0 \in \globals(s_0)$ and arbitrary actions
$a \in \pi(s'_0)$. We can then show that if $\pi$ is an implicitly coordinated
policy for a planning task $\Pi$, each successor state $s$ of the initial state
$s_0$ either will already be a goal state, or there will be some agent $i \in
\agents$ who can find an implicitly coordinated policy for his own associated planning task
prescribing an action for himself.

\begin{proposition}\label{prop:strong}
  Let $\pi$ be an implicitly coordinated policy for a planning task $\Pi = \tpl
  s_0, A, \owner, \gamma \tpr$ and let $s$ be a non-goal successor state of $s_0$
  by following $\pi$. Then there is an agent $i \in A$ such that $\pi(s)$
  contains at least one of agent $i$'s actions and $\pi$ is an implicitly
  coordinated policy of $\Pi(s^i)$.
\end{proposition}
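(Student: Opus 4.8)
The plan is to unfold what ``implicitly coordinated policy'' means --- a strong policy with respect to $\succ_{ps}$ in the sense of Definition~\ref{def:strongpolicy} --- and then to read off both assertions from the \emph{closedness} clause of that definition together with the \textsc{(unif)} condition of Definition~\ref{defglobalpolicy}. First I would unpack the hypothesis that $s$ is a non-goal successor of $s_0$ by following $\pi$: there are a global state $s'_0 \in \globals(s_0)$ and an action $a \in \pi(s'_0)$ with $s \in \globals(s'_0 \otimes a)$. Since the underlying epistemic model produced by a product update depends only on the two input models, not on their designated sets, and since $s'_0$ is global, we get $\globals(s'_0 \otimes a) \subseteq \globals((s'_0)^{\owner(a)} \otimes a) = \succ_{ps}(s'_0, a)$ --- this is exactly the inclusion $\succ_{cen} \subseteq \succ_{ps}$ already observed in the text. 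Hence $(s'_0, \pi(s'_0)) \in \pi$, $a \in \pi(s'_0)$, and $s \in \succ_{ps}(s'_0, a)$, so the closedness clause of $\pi$ with respect to $\succ_{ps}$ gives $s \models \gamma$ or $\pi(s) \neq \emptyset$; since $s$ is a non-goal state, $\pi(s) \neq \emptyset$. Choosing any $a' \in \pi(s)$ and setting $i := \owner(a')$ establishes the first claim, namely that $\pi(s)$ contains an action of agent~$i$.

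It then remains to check that this same $\pi$ is an implicitly coordinated policy for $\Pi(s^i) = \tpl s^i, A, \owner, \gamma \tpr$. Being a global policy constrains $\pi$ only relative to $A$ and $\owner$, so $\pi$ is still a global policy; and among the three requirements of Definition~\ref{def:strongpolicy}, \emph{finiteness} is a property of $\pi$ alone while \emph{closedness} makes no reference to the initial state, so both are inherited from the assumption that $\pi$ is an implicitly coordinated policy for $\Pi$. The only thing left is \emph{foundedness} for $\Pi(s^i)$, i.e.\ that each $t \in \globals(s^i)$ is a goal state or satisfies $\pi(t) \neq \emptyset$. Here I would use that $s$ is global, so the designated worlds of $s^i$ form a single $\sim_i$-equivalence class, whence $t^i = s^i$ for every $t \in \globals(s^i)$. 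Since $a' \in \pi(s)$, $\owner(a') = i$, and $s^{\owner(a')} = s^i = t^{\owner(a')}$, the \textsc{(unif)} condition forces $a' \in \pi(t)$, so $\pi(t) \neq \emptyset$ for all such $t$; foundedness follows, and with it the second claim.

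The step I expect to be the main (though only mild) obstacle is the bookkeeping around the two successor notions: one has to notice that although ``successor of $s_0$ by following $\pi$'' is phrased via the centralized update $s'_0 \otimes a$, the state $s$ so reached still lies in the perspective-sensitive set $\succ_{ps}(s'_0, a)$, which is what makes the closedness clause of the strong-policy definition applicable; and, dually, that the propagation of $a'$ to every $t \in \globals(s^i)$ relies on $s$ being a single global state, so that $s^i$ is one $\sim_i$-class and \textsc{(unif)} applies uniformly across the whole belief state $\globals(s^i)$. Everything else is a direct instantiation of the definitions.
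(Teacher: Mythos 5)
Your proof is correct and takes essentially the same route as the paper's own argument: closedness (made applicable because every centralized successor is also a perspective-sensitive successor) yields a nonempty $\pi(s)$, finiteness and closedness transfer unchanged since they do not mention the initial state, and uniformity propagates the chosen action of agent $i$ to every state in $\globals(s^i)$, giving foundedness. You merely spell out two details the paper leaves implicit, namely the inclusion $\succ_{cen}(s'_0,a) \subseteq \succ_{ps}(s'_0,a)$ and the fact that $t^i = s^i$ for all $t \in \globals(s^i)$ because $s$ is global.
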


\begin{proof}
  The existence of an action $a \in \pi(s)$ with \emph{some} owner $i$ follows
  directly from the closedness of implicitly coordinated policies. We need to
  show that $\pi$ is also implicitly coordinated for $\Pi(s^i)$. Finiteness and
  closedness of $\pi$ still hold for $\Pi(s^i)$, since $\pi$ was already finite
  and closed for $\Pi$, and this does not change when replacing $s_0$ with
  $s^i$. For foundedness of $\pi$ for $\Pi(s^i)$, we have to show that $\pi$ is
  defined and returns a nonempty set of actions for all global states $s' \in
  \globals(s^i)$. For $s$ itself, we already known that $a \in \pi(s)$. By
  uniformity, since all such $s'$ are indistinguishable from $s$ for agent $i$,
  $\pi$ must assign the same action $a$ to all states $s' \in \globals(s^i)$.
\end{proof}

\begin{example}\label{letter2} \upshape
  Consider again the letter passing problem introduced in
  Example \ref{rex}. Let $s_{0,2}$ and $s_{0,3}$ denote the global
  states that are initially considered possible by agent $2$.
  \begin{center}
    $s_{0,2} = \twoindist{$\at 1,\for 2$}{$\at 1,\for 3$}{$2,3$}{}{}{}{white}{}{2}$ \hspace{2cm}
    $s_{0,3} = \twoindist{$\at 1,\for 2$}{$\at 1,\for 3$}{$2,3$}{}{}{white}{}{}{2}$
  \end{center}
  With $s_{1,3} = s_{0,3} \otimes a_{12}$, a policy for agent $2$ is
  given by
  $\pi_1 = \myset{s_{0,3} \mapsto a_{12}, s_{0,2} \mapsto a_{12}},
  \pi_2 = \myset{s_{1,3} \mapsto a_{23}}.$ After the contemplated
  application of $a_{12}$ by agent $1$ (in both cases), agent $2$ can
  distinguish between $s_{1,2} = s_{0,2} \otimes a_{12}$, where the
  goal is already reached and nothing has to be done, and $s_{1,3}$,
  where agent $2$ can apply $a_{23}$, leading directly to the goal
  state $s_{1,3} \otimes a_{23}$. Thus, $\pi$ is an implicitly
  coordinated policy for $\als{\Pi}{2}$.
  While in the sequential case, agent $2$ has to wait for the first
  action $a_{12}$ of agent $1$ to be able to find its subplan, it can
  find the policy $(\pi_i)_{i \in \agents}$ in advance by explicitly
  planning for a run-time distinction.
\end{example}

In general, strong policies can be found by performing an AND-OR
search, where AND branching corresponds to branching over different
epistemic worlds and OR branching corresponds to branching over
different actions. By considering modally equivalent states as
duplicates and thereby transforming the procedure into a graph search,
space and time requirements can be reduced, although great care has to
be taken to deal with cycles correctly.

\leaveout{
It is easy to show that implicitly coordinated policies generalize
implicitly coordinated plans.

\begin{proposition}
  Each implicitly coordinated plan $(a_1, \ldots, a_n)$ for $\Pi =
  \tpl s_0,A,\owner,\gamma \tpr$ has a corresponding implicitly
  coordinated policy for $\Pi$.
\end{proposition}

\begin{proof}[Proof sketch]
  Let $s_i = s_0 \otimes a_1 \otimes \ldots \otimes a_{i}$. Then we
  can construct the policy $\pi$ with $\pi(s'_i) = a_{i+1}$ for all
  $s'_i \in \globals(s_i)$ and all $i=0,\dots,n-1$.  All three
  requirements of Definition~\ref{defglobalpolicy} trivially hold. We
  further have to show that $\pi$ is an implicitly coordinated policy
  for $\Pi$. Finiteness and foundedness are trivial. Closedness
  results from the correspondence to
  Proposition~\ref{slem}.
\end{proof}}

\section{Experiments}

We implemented a planner that is capable of finding implicitly
coordinated plans and policies\footnote{Our code can be downloaded
at \url{https://gkigit.informatik.uni-freiburg.de/tengesser/planner}},
and conducted two experiments: one
small case study of the Russian card problem~\cite{ditmarsch03}
intended to show how this problem can be modeled and solved from an
individual perspective, and one experiment investigating the scaling
behavior of our approach on private transportation problems in the style
of Examples \ref{rex} and \ref{letter2}, using instances of increasing
size.
\leaveout{
Our planner is written in C++ and uses breadth-first search with
an approximate bisimulation test that is used for state contraction
and duplicate detection. All experiments were performed on a computer
with a single Intel i7-4510U CPU core.
}
\vspace{-0.3em}

\subsection{Russian Card Problem}
\enlargethispage{1\baselineskip}
In the Russian card problem, seven cards numbered $0, \ldots, 6$ are
randomly dealt to three agents. Alice and Bob get three cards each,
while Eve gets the single remaining card. Initially, each agent only
knows its own cards. The task is now for Alice and Bob to inform each
other about their respective cards using only public announcements,
without revealing the holder of any single card to Eve.
The problem was analyzed and solved from the global perspective
by van Ditmarsch et al.~\cite{ditmarsch06}, and a given protocol was
\emph{verified} from an individual perspective
by {\AA}gotnes et al.~\cite{agotnes10}. 
We want to \emph{solve} the
problem from the individual perspective of agent Alice and find an
implicitly coordinated policy for her.
%
To
keep the problem computationally feasible, we impose some restrictions on 
the resulting policy, namely that the first action has to be Alice
truthfully announcing five possible alternatives for her own hand,
and that the second one has to be Bob announcing the card Eve is holding.
Without loss of generality, we fix one specific initial hand for Alice, namely
$012$. From a plan for this initial hand, plans for all other initial
hands can be obtained by renaming. For simplicity, we only generate
\emph{applicable} actions for Alice, i.e.\ announcements that include
her true hand $012$. This results in the planning task having a
total of $46376$ options for the first action, and $7$ for the second
action. Still, the initial state $s_0$ consists of $140$ worlds, one
for each possible deal of cards. Agents can only distinguish worlds
where their own hands differ. Alice's designated worlds in her
associated local state of $s_0$ are those four worlds in which she
holds hand $012$.

Our planner needs approximately two hours and $600$MB of memory to come
up with a solution policy. In the solution, Alice first announces her
hand to be one of $012$, $034$, $156$, $236$, and $245$. It can be seen
that each of the five hands other than the true hand $012$ contains at least
one of Alice's and one of Bob's cards, meaning that Bob will immediately be
able to identify the complete deal. Also, Eve stays unaware of the individual
cards of Alice and Bob since she will be able to rule out exactly two of the hands,
with each of Alice and Bob's cards being present and absent in at least
one of the remaining hands. Afterwards, Alice can wait for
Bob to announce that Eve has either card $3$, $4$, $5$ or $6$.

\subsection{Mail Instances}

Our second experiment concerns the letter passing problem from
Examples~\ref{rex} and \ref{letter2}.
We generalized the scenario to allow an arbitrary
number of agents with an arbitrary undirected neighborhood graph,
indicating which agents are allowed to directly pass letters to each
other. As neighborhood graphs, we used randomly generated
Watts-Strogatz small-world networks~\cite{watts98}, exhibiting
characteristics that can also be found in social
networks. Watts-Strogatz networks have three parameters: The number
$N$ of nodes (determining the number of agents in our setting), the
average number $K$ of neighbors per node (roughly determining the
average branching factor of a search for a plan), and the probability
$\beta$ of an edge being a ``random shortcut'' instead of a ``local
connection'' (thereby influencing the shortest path lengths between
agents). We only generate \emph{connected} networks in order to
guarantee plan existence.

We distinguish between the example domains \textsc{MailTell} and
\textsc{MailCheck}. To guarantee plan existence, in both
scenarios the actions are modeled such as to ensure that the letter
position remains common knowledge among the agents in all reachable
states.
The mechanics of \textsc{MailTell} directly correspond
to those given in Example \ref{rex}. There
is only one type of action, publicly passing the letter to a
neighboring agent while privately informing him about the final
addressee. This allows for sequential implicitly coordinated plans. In
the resulting plans, letters are simply moved along a shortest path to
the addressee.
In contrast, in \textsc{MailCheck}, an agent that has the letter can
only check if he himself is the addressee or not using a separate
action (without learning the actual addressee if it is not him). To
ensure plan existence in this scenario, we allow an agent to pass
on the letter only if it is destined for someone else. Unlike in
\textsc{MailTell}, conditional plans are required here. In a solution
policy, the worst-case sequence of successively applied actions contains
an action passing the letter to each agent at least once. As soon as
the addressee has been reached, execution is stopped.
\begin{table}
\caption{Runtime evaluation for randomly generated \textsc{MailTell} and \textsc{MailCheck} instances}
\label{tbl:mail}
\centering
\begin{subtable}{.5\textwidth}
\caption{\textsc{MailTell}, $K=4$, $\beta=0.1$}
\label{tbl:mailtell}
\centering
\begin{tabular}{l ccccc}
agents & $10$ & $20$ & $30$ & $40$ & $50$ \\ \hline
direct path & $1.4$ & $2.3$ & $3.1$ & $3.7$& $3.6$ \\ 
time/s
& $0.02$ 
& $0.14$ 
& $0.65$ 
& $2.38$ 
& $5.02$ 
\end{tabular}
\end{subtable}%
\begin{subtable}{.5\textwidth}
\caption{\textsc{MailCheck}, $K=2$, $\beta=0.1$}
\label{tbl:mailcheck}
\centering
\begin{tabular}{l ccccc}
agents    & $10$   & $15$   & $20$     & $25$    & $30$ \\ \hline
full path & $10.4$ & $16.1$ & $21.7$   & $27.6$  & $33.2$ \\ 
time/s
&  $0.02$ 
&  $0.18$ 
&  $1.14$ 
&  $8.03$ 
& $38.76$ 
\end{tabular}%
\end{subtable}%
\end{table}


%
Experiments were conducted for both scenarios with different parameters (Table
\ref{tbl:mail}).
For finding sequential as well as conditional plans, our
implementation uses breadth-first search over a regular graph and over an AND-OR
graph, respectively. For each set of parameters, 100 trials were performed.
For \textsc{MailTell}, {\em direct path} denotes the average
shortest path length between sender and addressee, while for \textsc{MailCheck},
{\em full path} denotes the average length of a shortest path passing through
all agents starting from the sender.

While the shortest path length between sender and addressee grows very
slowly with the number of agents (due to the shortcut
connections in the network), the shortest path passing through all agents roughly
corresponds to the number of agents.  Since these measures directly
correspond to the minimal plan lengths, the observed exponential
growth of space and time requirements with respect to them (and to the
base $K$) is unsurprising.

Note also that in both scenarios, the number of agents determines the
number of worlds (one for each possible addressee) in the initial
state.  Since the preconditions of the available actions are mutually
exclusive, this constitutes an upper bound on the number of worlds per
state throughout the search. Thus we get only a linear overhead in
comparison to directly searching the networks for the relevant paths.
\section{Conclusion and Future Work}

We introduced an interesting new cooperative, decentralized
planning concept without the necessity of explicit coordination or
negotiation. Instead, by modeling all possible communication directly
as plannable actions and relying on the ability of the autonomous
agents to put themselves into each other's shoes (using perspective
shifts), some problems can be elegantly solved achieving implicit
coordination between the agents. We briefly demonstrated an
implementation of both the sequential and conditional solution
algorithms and its performance on the Russian card problem and two
letter passing problems.

\enlargethispage{\baselineskip}
Based on the foundation this paper provides, a number of
challenging problems needs to be addressed. First of all, concrete
problems (e.g.\ epistemic versions of established multi-agent planning
tasks) need to be formalized with a particular focus on the
question of which kinds of communicative actions the
agents would need to solve these problems in an implicitly coordinated
way. As seen in the \textsc{MailTell} benchmark, the dynamic epistemic
treatment of a problem does not necessarily lead to more than linear
overhead. It will be interesting to identify classes of tractable
problems and see how agents cope in a simulated environment.
Another issue that is relevant in practice concerns the interplay of
the single agents' individual plans. In our setting, the agents have
to plan individually and decide autonomously when and how to
act. Also, when it comes to action application, there is no predefined
notion of agent precedence. This leads to the possibility of {\em
  incompatible} plans, and in consequence to the necessity for agents
having to {\em replan} in some cases.  While our notion of implicitly
coordinated planning explicitly forbids the execution of actions
leading to {\em dead-end situations} (i.e.\ non-goal states where
there is no implicitly coordinated plan for any of the agents),
replanning can still lead to {\em livelocks}.  Both the conditions
leading to livelocks and individually applicable strategies to avoid
them need to be investigated.



\nocite{*}
\bibliographystyle{eptcs}
\bibliography{paper}
\end{document}